\newtheorem{theorem}{Theorem}[section]
\newtheorem{lemma}[theorem]{Lemma}
\begin{document}

\title{Learning Safe, Generalizable Perception-based Hybrid Control with Certificates}

\author{Charles Dawson$^{1}$, Bethany Lowenkamp$^{2}$, Dylan Goff$^1$, and Chuchu Fan$^{1}$%
\thanks{Manuscript received: September 9, 2021; Revised November 23, 2021; Accepted December 30, 2021.}%Use only for final RAL version
\thanks{This paper was recommended for publication by Editor Jens Kober upon evaluation of the Associate Editor and Reviewers' comments.
This work was supported by the NASA University Leadership Initiative (grant \#80NSSC20M0163) and the Defense Science and Technology Agency in Singapore, but this article solely reflects the opinions and conclusions of its authors and not any NASA entity, DSTA Singapore, or the Singapore Government. CD is supported by the NSF GRFP under Grant No. 1745302.} %Use only for final RAL version
\thanks{$^{1} $CD, DG, and CF are with the Dept. of Aeronautics and Astronautics, MIT, Cambridge MA
        {\tt\footnotesize \{cbd, dgoff, chuchu\}@mit.edu}}%
\thanks{$^{2} $BL is with the Dept. of Mechanical Engineering, MIT, Cambridge MA
        {\tt\footnotesize bethlow@mit.edu}}%
\thanks{$^{*} $BL and DG contributed equally.}%
\thanks{Digital Object Identifier (DOI): see top of this page.}
}

% \author{\IEEEauthorblockN{Charles Dawson\textsuperscript{1}, Bethany Lowenkamp\textsuperscript{2}, Dylan Goff\textsuperscript{1}, Chuchu Fan\textsuperscript{1}}
% \IEEEauthorblockA{\textit{\textsuperscript{1} Department of Aeronautics and Astronautics, \textsuperscript{2} Department of Mechanical Engineering} \\
% \textit{Massachusetts Institute of Technology}\\
% Cambridge, USA \\
% \{cbd, bethlow, dgoff, chuchu\}@mit.edu}
% }

\maketitle

\begin{abstract}
Many robotic tasks require high-dimensional sensors such as cameras and Lidar to navigate complex environments, but developing certifiably safe feedback controllers around these sensors remains a challenging open problem, particularly when learning is involved. Previous works have proved the safety of perception-feedback controllers by separating the perception and control subsystems and making strong assumptions on the abilities of the perception subsystem. In this work, we introduce a novel learning-enabled perception-feedback hybrid controller, where we use Control Barrier Functions (CBFs) and Control Lyapunov Functions (CLFs) to show the safety and liveness of a full-stack perception-feedback controller. We use neural networks to learn a CBF and CLF for the full-stack system directly in the observation space of the robot, without the need to assume a separate perception-based state estimator. Our hybrid controller, called LOCUS (Learning-enabled Observation-feedback Control Using Switching), can safely navigate unknown environments, consistently reach its goal, and generalizes safely to environments outside of the training dataset. We demonstrate LOCUS in experiments both in simulation and in hardware, where it successfully navigates a changing environment using feedback from a Lidar sensor.
\end{abstract}

\begin{IEEEkeywords}
perception-based control, safe control, control barrier functions, certificate learning
\end{IEEEkeywords}

\section{Introduction}

\IEEEPARstart{U}{sing} visual input to control autonomous systems without compromising safety or soundness is a challenging problem in robotics. Traditional methods from control theory provide powerful tools for safety and performance analysis but lack the expressiveness to deal with rich sensing models such as vision or Lidar. On the other hand, learning-based methods have been used successfully on visual-feedback control tasks including autonomous driving~\cite{Pan2017AgileOA} and aircraft taxiing~\cite{taxinet}, but ensuring the safety of these controllers remains an open question.

Most existing work analyzing the safety of learned vision-based controllers has been done \textit{post hoc}, where the controller is synthesized and then independently verified. Works such as~\cite{julian2020} assess safety through adversarial testing, searching for an input sequence that causes a learned vision-based controller to fail. Others such as~\cite{Katz2021dasc} learn a generative model to predict observations from states, then conduct a reachability analysis to check the safety of the concatenated generator-controller network. In both cases, controller safety and controller synthesis are treated as two separate issues, rather than using safety considerations to inform the synthesis process.

\begin{figure}[t!]
    \centering
    \includegraphics[width=\linewidth]{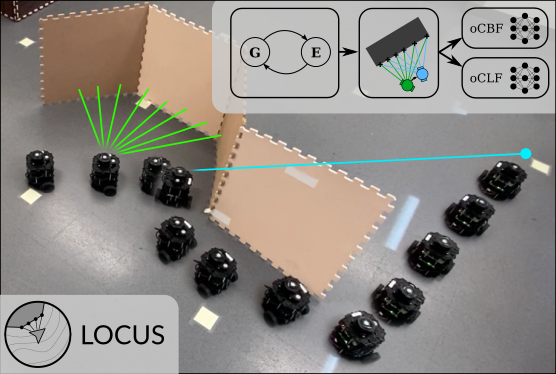}
    \caption{Our controller combines a learned Control Barrier Function (based on Lidar sensor data, green) and a learned Control Lyapunov Function (based on range and bearing relative to the goal, blue) with a hybrid control architecture. The robot switches between goal-seeking and exploratory modes, but is guaranteed to maintain safety (using the barrier function) and eventually reach the goal if a safe path exists (using the Lyapunov function).}
    \label{fig:headline}
\end{figure}

Integrating safety into the control synthesis process has born impressive results in the state-feedback regime, in the form of certificate learning for control. These approaches draw on control theoretic certificates such as Lyapunov functions~\cite{Giesl2015,Dai2020}, barrier functions~\cite{ames_cbf,Peruffo2020}, and contraction metrics~\cite{Sun2020,Singh2020} that prove properties such as stability, forward invariance, and differential tracking stability of dynamical systems. Recent works have demonstrated that learning these certificates alongside a state-feedback control policy can enable provably safe controllers for walking~\cite{Castaneda2020}, stable flight in turbulent conditions~\cite{Sun2020}, and decentralized multi-agent control~\cite{Qin2021}.

Several works have attempted to extend these approaches to observation-feedback control synthesis~\cite{Dean2020,Dean2020a,anonymous2021robust}, but these works make a number of \textcolor{black}{strong} assumptions. For instance, \cite{Dean2020a} and~\cite{anonymous2021robust} assume that the robot's state can be inferred with bounded error from a single observation, limiting them to ``third-person'' camera perspectives where each observation completely captures the state of the robot and its environment. As a result, these approaches do not readily apply when the vision system (either a camera or Lidar scanner) is mounted to the robot and cannot observe the entire environment. Further, these approaches assume that the environment (including the locations of any obstacles) is known prior to deployment, limiting the ability of these controllers to generalize to new environments. \textcolor{black}{We relax these assumptions to permit deployment on a robot in previously-unseen environments, although we retain some assumptions on the sensor model (see~\ref{assumptions})}.

In this work, our main contribution is to combine techniques from machine learning and hybrid control theory to learn safe observation-feedback controllers that generalize to previously-unseen environments. In particular, we combine learned observation-based barrier and Lyapunov functions with a novel certificate-based hybrid control architecture. We provide proofs that our hybrid controller will remain safe and reach its goal without deadlock when provided with valid certificate functions, \textcolor{black}{and we use neural networks to learn certificates that are valid with high probability.}

To our knowledge, this represents the first attempt to provide safety and goal-reaching guarantees for a perception-feedback controller that considers the full autonomous navigation stack (rather than sand-boxing the perception system with assumptions on error rates and considering the controller separately). We validate our approach, called LOCUS (Learning-enabled Observation-feedback Control Using Switching) both in simulation and in hardware, where we demonstrate a mobile robot navigating an unseen environment with Lidar measurements using our learned hybrid controller. \textcolor{black}{Experiments show that our learned controller can be run 6 times faster than MPC, reaches the goal almost twice as quickly as end-to-end reinforcement learning policies, and maintains a perfect safety rate in previously-unseen environments.}

\section{Problem Statement}

Our goal is to synthesize an observation-feedback controller for a robot with nonlinear dynamics navigating an unknown environment. The robot perceives its environment using a Lidar sensor, as in Fig.~\ref{fig:headline}, and it must navigate to a goal point while avoiding collisions. In contrast with higher-level approaches that combine SLAM with a global path planner, or robust planning approaches like~\cite{ChenFaSTrack}, we restrict our focus to real-time controllers with feedback from local observations. Our observation-feedback controller can be combined with SLAM and planning modules to track waypoints along a path, but it can also be used without those components, or when those components fail, without compromising safety.

Formally, we consider a robot with nonlinear discrete-time dynamics $x_{t+1} = f(x_t, u_t)$, where $x \in \mathcal{X} \subseteq \R^n$ and $u \in \mathcal{U} \subseteq \R^m$, and observation model $o_t = o(x_t) \in \mathcal{O} \subseteq \R^p$ (denote by $o_t^i$ the $i$-th element of this observation). In particular, we focus on a Lidar observation model that measures the locations where $n_{rays}$ evenly-spaced rays originating at the robot first make contact with an object in the environment. In this model, $\mathcal{O} = \R^{2n_{rays}}$, and the points are expressed in robot-centric coordinates. Given a goal location $x_g$, we seek a control policy $u_t = \pi(o_t)$ that drives $x_t \to x_g$ as $t\to\infty$. Moreover, we require that $u_t$ avoids colliding with obstacles in the environment, defined as occurring when $\min_{i=0,\ldots,n_{rays}} ||o_t^i|| \leq d_c$, where $d_c > 0$ is a user-defined margin (i.e. collision occurs when the robot gets too close to any obstacle in the environment).

We make a number of assumptions about the robot's dynamics and capabilities, as well as on the structure of the environment. Some of these assumptions are standard, while others are particular to our approach, as outlined in Section~\ref{overview}.

\subsection{Assumptions}\label{assumptions}

\subsubsection{Dynamics \& State Estimation}\label{assumptions_dynamics}

Our approach is applicable in two regimes. If a (potentially noisy) state estimate is available, we make no assumptions on the dynamics other than controllability (a standard assumption). If no state estimate is available (other than a measurement of range $\rho$ and bearing $\phi$ to the goal point), then we assume that the robot's dynamics are controllable and \textit{approximately local}, by which we mean the state update has the form $x_{t+1} - x_{t} \approx f_\Delta(u_t)$. This means that the change in state (expressed in a local frame) can be approximated using only the control input (i.e. without a state estimate).
This second case is motivated by our choice of hardware platform (a mobile robot with Dubins car dynamics, which satisfy this assumption and thus do not require any state estimation), but we include both cases for completeness.

\subsubsection{Observation Model}

We assume that the robot observes its environment using a 2D, finite-resolution, finite-range Lidar sensor. This sensor measures the $xy$ location in the robot frame of the first point of contact with an obstacle along each of $n_{rays}$ evenly-spaced rays. If no contact point is detected within maximum range $d_o$, the measurement saturates at that range. This model distinguishes us from less-realistic overhead camera models in \cite{anonymous2021robust}, \textcolor{black}{but it is important to note that we do not explicitly model sensor noise (we hope to extend our theoretical analysis to cover sensor noise in a future work).}

\subsubsection{Environment}

We assume only that there are no ``hedgehog'' obstacles with spikes thinner than the gap between adjacent Lidar beams and that all obstacles are bounded.

\section{Overview}\label{overview}

To solve this safe navigation problem, we propose the controller architecture in Fig.~\ref{fig:architecture}. The core of this controller is a Control Barrier Function (CBF) defined on the space of observations, which enables safe observation-based control by guaranteeing that the robot will not collide with any obstacles. Unlike other CBF-based approaches, which define the CBF in terms of the state-space and require accurate state estimates and knowledge of the environment \cite{ames_cbf}, our approach defines the CBF directly in observation-space. We also use a Control Lyapunov Function (CLF) to guide the robot towards its goal.

In an offline phase, we use neural networks to learn the observation-space CBF and CLF. In an online phase, we use the learned CBF and CLF in a hybrid controller that switches between goal-seeking and exploratory behavior to safely navigate an \textit{a priori} unknown environment. To ensure that the CBF and CLF constraints are respected during the online phase, the controller uses its knowledge of system dynamics to approximate future observations with a one-step horizon, using this ``approximate lookahead'' to select optimal control actions given the CBF/CLF constraints. Section~\ref{certificates} describes an extension of CBF theory to include control from observations, and Section~\ref{hybrid_controller} describes the hybrid goal-seeking/exploratory controller, including the approximate lookahead strategy. Section~\ref{learning} describes our approach to learning observation-space CBFs and CLFs using neural networks. Section~\ref{experiments} presents experiments, both in simulation and in hardware, demonstrating the performance of our approach.

\begin{figure}[b]
    \centering
    \includegraphics[width=0.8\linewidth]{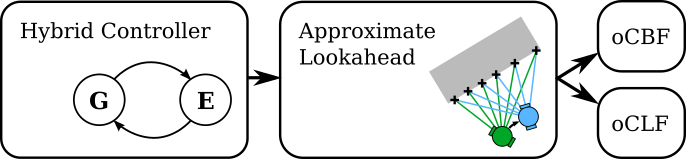}
    \caption{Block diagram of controller architecture.}
    \label{fig:architecture}
\end{figure}

\section{Ensuring Safety and Convergence with Observation-space Certificates}\label{certificates}

As shown in Fig.~\ref{fig:architecture}, at the foundation of our approach is a pair of certificate functions: a Control Barrier Function (CBF) that allows the robot to detect and avoid obstacles, and a Control Lyapunov Function (CLF) that guides the robot towards its goal. Existing approaches to designing CBF/CLF-based controllers have defined these certificates as functions of the robot's state \cite{ames_cbf,Choi2020,Qin2021,Dean2020a}; however, this dependence on state means that state-based certificates, particularly CBFs, have difficulty generalizing to new environments. For example, a state-based CBF might encode regions of the state space corresponding to obstacles, but if the obstacles move then the entire CBF is invalidated.

A more flexible implementation is to define these certificates as functions of observations (e.g. Lidar returns and the range and bearing to the goal). Instead of encoding which regions of the state space are ``inside'' obstacles, an observation-based CBF can encode which observations indicate that the robot is unsafe (e.g. when the minimum distance measured via Lidar is less than some threshold). In this paper, we extend traditional state-based CBFs to observation-based CBFs (oCBFs) defined as functions of observations . Formally, oCBFs are scalar functions $h: \mathcal{O} \mapsto \R$ such that for some $0 \leq \alpha_h < 1$
\begin{align}
    o\in\mathcal{O}_{safe} \ \implies \quad &h(o) \leq 0 \label{ocbf_safe}\\
    o\in\mathcal{O}_{unsafe} \ \implies \quad &h(o) \geq 0 \label{ocbf_unsafe} \\
    \forall x\in\mathcal{X}\ \exists u\in\mathcal{U}\ \text{s.t.}\quad &h\left[o(f(x_t, u))\right] - \alpha_h h[o(x_t)] \leq 0 \label{ocbf_dynamics}
\end{align}
Similarly, an observation-based CLF (oCLF) can be defined as a scalar function $V: \mathcal{O} \mapsto \R$ such that for some $0 \leq \alpha_V < 1$
\begin{align}
    V(o) \geq 0;\quad o\in\mathcal{O}_{goal}\iff V(o) = 0  \label{oclf_pd}\\
    \forall x\in\mathcal{X}\ \exists u\in\mathcal{U}\ \text{s.t.}\  V\left[o(f(x_t, u))\right] - \alpha_V V[o(x_t)] \leq 0 \label{oclf_dynamics}
\end{align}
Of course, these definitions require that the space of observations $\mathcal{O}$ is rich enough to both differentiate between safe and unsafe sets $\mathcal{O}_{safe}$ and $\mathcal{O}_{unsafe}$ (e.g. via Lidar measurements) and detect when the robot has reached the goal in $\mathcal{O}_{goal}$ (e.g. by measuring range to the goal). In some ways, this extension is trivial, since every oCBF $h: \mathcal{O} \mapsto \R$ (resp. oCLF $V: \mathcal{O} \mapsto \R$) defines a state-space CBF $h \circ o: \mathcal{X} \mapsto \R$ (resp. CLF $V \circ o: \mathcal{X} \mapsto \R$), and so oCBFs and oCLFs inherit the safety and convergence guarantees of their state-space equivalents. In particular, as long as a policy selects actions from the set $\mathcal{K}_{CBF} = \set{u\ :\ h\left[o(f(x, u))\right] - \alpha_h h[o(x)] \leq 0}$, the system will remain safe (or, if started in the unsafe region, move to the safe region). Similarly, as long as a policy selects actions from the set $\mathcal{K}_{CLF} = \set{u\ :\ V\left[o(f(x, u))\right] - \alpha_h V[o(x)] \leq 0}$, the system will converge to the goal.

In addition to inheriting these safety and convergence guarantees, oCBFs and oCLFs also inherit the two main drawbacks of CBFs and CLFs. First, it is very difficult to construct $h$ and $V$ by hand to satisfy conditions~\eqref{ocbf_dynamics} and~\eqref{oclf_dynamics}. Second, if the intersection $\mathcal{K}_{CBF} \cap \mathcal{K}_{CLF}$ empty, then traditional CBF/CLF controllers have to choose between preserving safety or convergence. As such, the novelty of our approach is not in the straightforward extension to observation-based certificates, but in our solutions to these two drawbacks. To address the first drawback, we use neural networks to learn an oCBF and oCLF, allowing our approach to generalize to previously-unseen environments; to solve the second issue, we use a novel hybrid control architecture to prove the safety and convergence of the resulting learning-enabled controller. We discuss each of these advances in the next two sections; we begin by discussing the hybrid control architecture, then describe how to learn oCBFs and oCLFs for use with that controller.

\section{Hybrid CBF/CLF Control Policy}\label{hybrid_controller}

oCBF and oCLF certificates guarantee safety and stability by defining the sets of allowable control inputs $\mathcal{K}_{CBF}$ and $\mathcal{K}_{CLF}$. As long as the input lies within $\mathcal{K}_{CBF}$ the robot will remain safe, and as long as it remains within $\mathcal{K}_{CLF}$ it will converge to its goal. However, there is no guarantee that these sets will have a non-empty intersection. We can easily construct ``bug-trap'' environment like that in Fig.~\ref{fig:two-modes-operation} where following the oCLF would lead the robot into an obstacle, violating the oCBF. In these situations, it is common to relax the oCLF condition, allowing the robot to remain safe even if it can no longer reach the goal \cite{Castaneda2020}. Although this relaxation preserves safety, it can lead to \textit{deadlock} when the robot becomes stuck and unable to reach its goal.

To avoid deadlock (i.e. ensure \textit{liveness}), we propose a hybrid controller with two modes, \textbf{G}oal-seeking and \textbf{E}xploratory, as in Fig.~\ref{fig:two-modes-operation}. We will first intuitively explain the operation of these two modes, then provide a formal description of each mode and a proof of the safety and liveness of this controller.

Both modes enforce the oCBF condition $u \in \mathcal{K}_{CBF}$ at all times, but they differ in their use of the oCLF. The goal seeking mode enforces the oCLF condition along with the oCBF condition, $u \in \mathcal{K}_{CBF} \cap \mathcal{K}_{CLF}$, maintaining safety while approaching the goal. When it is no longer feasible to satisfy both the oCLF and oCBF conditions, the robot switches to the exploratory mode, where it ignores the oCLF condition and executes a random walk around a level set of the oCBF (this can be seen as randomly exploring the region near the surface of the obstacle that caused it to become stuck, inspired by classic ``right-hand rules'' for maze navigation but adapted for a nonlinear dynamical system). The robot switches back to the goal-seeking mode when it encounters a region where the oCLF has decreased below the value where the robot became stuck (indicating that the robot is now closer to the goal than it was when beginning its exploration). This process is visualized in Fig.~\ref{fig:two-modes-operation}. Note that this controller is substantially different from the vision-based hybrid controller in \cite{anonymous2021robust}; ours switches between seeking the goal and exploring an unknown environment, while \cite{anonymous2021robust} switches between different controllers hand-designed for navigating a known environment.

\begin{figure}[th]
    \centering
    \includegraphics[width=0.9\linewidth]{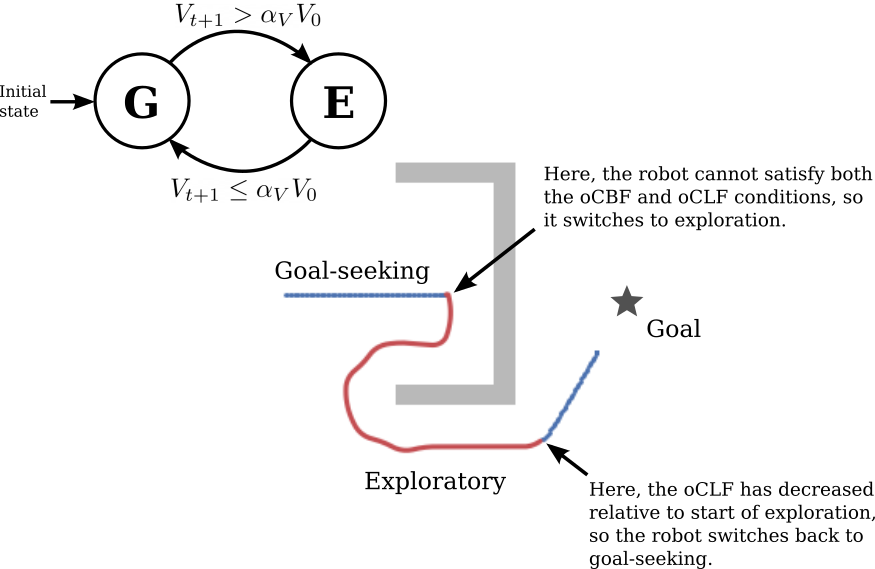}
    \caption{The hybrid control scheme allows the robot to escape potential deadlocks, as in this bug-trap environment.}
    \label{fig:two-modes-operation}
\end{figure}

Formally, the robot's behavior is given by solving an optimal control problem with a one-step horizon (in the goal-seeking mode) or executing a constrained stochastic policy (in the exploratory mode). As opposed to a traditional MPC policy, which must consider a multi-step horizon, the oCBF and oCLF encode all the information needed to ensure long-term safety and convergence into a single step. Crucially, the oCBF and oCLF rely only on local information: Lidar observations and range and bearing relative to the goal.

In the goal-seeking mode, the robot solves the optimization:
\vspace{-1em}
\begin{subequations}
\begin{align}
    u = \argmin_u & \quad \lambda_1 ||u|| \label{eq:g_controller} \\
    \rm{s.t.} & \quad V_{t+1} - \alpha_V V_t \leq 0 \label{eq:g_clf}\\
    & \quad h_{t+1} - \alpha_h h_t \leq 0 \label{eq:g_cbf}
\end{align}
\end{subequations}
Constraint~\eqref{eq:g_clf} ensures that the controller progresses towards the origin, while constraint~\eqref{eq:g_cbf} ensures that the controller remains safe. In practice, \textcolor{black}{we solve an unconstrained analogue of this problem using a penalty method}, incorporating the oCLF and oCBF constraints into the objective with a penalty coefficients $\lambda_2$ and $\lambda_3$, where $\lambda_1 = 0.01$, $\lambda_2 = 1.0$, and $\lambda_3 = 10^3$ such that $\lambda_3 \gg \lambda_1 \gg \lambda_2$. \textcolor{black}{We find that this choice of penalty coefficients results in good performance in practice by effectively filtering out any solutions that do not satisfy the constraints~\eqref{eq:g_clf} and~\eqref{eq:g_cbf}.} We also note that even if this approach fails to find a feasible solution, the controller can immediately detect this fault, raise an error, and switch to a fail-safe mode. The robot transitions from the goal-seeking mode to the exploratory mode when the oCLF constraint in \eqref{eq:g_controller} becomes infeasible, indicating that the robot is stuck and must explore its environment to find a new path to the goal.

Once in the exploratory mode, the robot follows a stochastic policy by discretizing the input space and sampling from the (appropriately normalized) distribution
\begin{equation}
\begin{aligned}
    \rm{Pr}(u) \propto \begin{cases}
    0 \qquad h_{t+1} - (1-\alpha_h)h_t \geq 0 \vee\ |h_{t+1} - h_0| \geq \epsilon_h \\
    1 / \rm{exp}\left( \lambda_1[h_{t+1} - (1-\alpha_h)h_t]_+\right. \qquad \text{otherwise}\\
    \quad + \left . \lambda_2 [|h_{t+1} - h_0| - \epsilon_h]_+ + \lambda_3 v^2  \right) 
    \end{cases}
\end{aligned}\label{eq:e_controller}
\end{equation}
where $h_0$ is the value of the oCBF recorded upon entering the exploratory mode, $\epsilon_h$ is the width of the region around the obstacle that the policy should explore, $v$ is the forward velocity implied by control input $u$ (included to encourage exploration), $[x]_+ = \max(x, 0.001 x)$ is the leaky rectified linear unit, and $\lambda_i$ are coefficients specifying the relative importance of each term. In our experiments, $\lambda_1 = \lambda_2 = 10^3$, and $\lambda_3 = -0.1$ to encourage faster exploration. The robot switches from the exploratory to the goal-seeking mode when it reaches a state where the oCLF value decreases below $\alpha_V V_0$, where $V_0$ is the value of the oCLF observed upon most recently switching to the exploratory mode. Both the optimization problem \eqref{eq:g_controller} and the stochastic policy \eqref{eq:e_controller} are evaluated by discretizing the input space and using the approximate one-step lookahead strategy discussed in Section~\ref{lookahead}; \textcolor{black}{in practice, both policies can be evaluated quickly by parallelizing the search over the input space.}

Since the oCBF constraint is enforced in both the goal-seeking and exploratory modes, the hybrid controller will be safe regardless of which mode it is in. As a remark, the discrete-time formulation of oCBFs allows the robot to tolerate changes in its environment, both continuous (moving an obstacle), and discontinuous (adding an obstacle). As long as the change does not flip the sign of $h$ (in essence, as long as the new obstacle position is far enough away that the robot does not instantly become unsafe), the oCBF controller will recover and avoid the obstacle. If the new obstacle makes the robot unsafe, the oCBF can immediately detect that fault.

It remains to prove that this controller will eventually reach the goal. We will do this by combining the following lemmas.

\begin{lemma}\label{lemma_1}
While in the goal-seeking mode, the robot converges to the goal.
\end{lemma}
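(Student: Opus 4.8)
The plan is to show that in the goal-seeking mode the oCLF value $V$ is driven monotonically toward zero, and then invoke the positive-definiteness property~\eqref{oclf_pd} to conclude that the robot reaches the goal. The key observation is that the goal-seeking mode, by construction, enforces both the oCBF and oCLF constraints~\eqref{eq:g_clf}–\eqref{eq:g_cbf} whenever it is active, and the robot only remains in this mode as long as the oCLF constraint~\eqref{eq:g_clf} is feasible. Thus, for every time step $t$ spent in the goal-seeking mode, we are guaranteed $V_{t+1} \leq \alpha_V V_t$ with $0 \leq \alpha_V < 1$.

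First I would unroll this one-step contraction over consecutive time steps in the goal-seeking mode to obtain $V_{t+k} \leq \alpha_V^k V_t$, so that $V_{t+k} \to 0$ geometrically as $k \to \infty$. Second, I would combine this with the nonnegativity half of~\eqref{oclf_pd}, namely $V(o) \geq 0$, to apply a squeeze argument and conclude $V \to 0$. Third, I would appeal to the equivalence $o \in \mathcal{O}_{goal} \iff V(o) = 0$ in~\eqref{oclf_pd} to translate convergence of the certificate value into convergence to the goal set, which is exactly the claimed conclusion. This mirrors the standard discrete-time Lyapunov argument, now transported into observation space via the fact noted earlier that $V \circ o$ is a bona fide state-space CLF and hence inherits its convergence guarantees.

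The main obstacle is the implicit conditional ``while in the goal-seeking mode.'' The clean geometric decay only holds for as long as the robot does not switch to the exploratory mode, and the contraction~\eqref{eq:g_clf} is only available when the optimization~\eqref{eq:g_controller} remains feasible. So the honest statement is conditional: the lemma establishes convergence \emph{assuming} the robot stays in (or the relevant switching logic keeps returning it to) goal-seeking, and the harder global liveness result — that the robot does not get permanently trapped in the exploratory mode — must be deferred to the companion lemma(s) that the excerpt promises to combine with this one. I would therefore keep Lemma~\ref{lemma_1} scoped narrowly to the behavior within a single goal-seeking epoch, making explicit that the existence of a feasible $u \in \mathcal{K}_{CBF} \cap \mathcal{K}_{CLF}$ at each step is precisely the condition under which the mode persists, and leave the interaction between modes to the subsequent argument. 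A secondary subtlety worth flagging is that the discrete-time geometric contraction gives asymptotic convergence ($V \to 0$) rather than finite-time arrival at $V = 0$; depending on how $\mathcal{O}_{goal}$ is defined (a point versus a neighborhood), one may only claim convergence to the goal in the limit, which is consistent with the problem statement's requirement that $x_t \to x_g$ as $t \to \infty$.
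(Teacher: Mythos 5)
Your proposal is correct and follows essentially the same route as the paper: the paper's proof simply notes that constraint~\eqref{eq:g_clf} must hold at every step the robot spends in the goal-seeking mode (since violation triggers a mode switch), leaving the resulting geometric decay $V_{t+k} \leq \alpha_V^k V_t \to 0$ and the appeal to~\eqref{oclf_pd} implicit, which you spell out explicitly. Your flagged caveats — that the lemma is scoped to time within the mode, with liveness deferred to Lemma~\ref{lemma_2} and the theorem, and that convergence is asymptotic — match exactly how the paper structures and uses this result.
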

\begin{proof}
This property will hold as long as constraint~\eqref{eq:g_clf} holds for the optimal solution of \eqref{eq:g_controller}. Since the robot transitions out of the goal-seeking mode when this constraint is violated, this condition must hold whenever the robot is in this mode.
\end{proof}
\vspace{-0.8em}
\begin{lemma}\label{lemma_2}
As long as there exists a path from the robot's current state to the goal such that the oCBF condition holds along that path (i.e. as long as the goal is reachable), the robot will eventually exit the exploratory mode, at which point the value of the oCLF will have decreased by a factor of $\alpha_V$ relative to its value entering the exploratory mode.
\end{lemma}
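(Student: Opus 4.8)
The plan is to split the statement into its two components and dispatch them in order. The second component---that upon exit the oCLF has decreased by the factor $\alpha_V$---is immediate from the construction of the controller: by definition the robot leaves the exploratory mode precisely when it first reaches a state with $V \leq \alpha_V V_0$, where $V_0$ is the oCLF value recorded on entering the mode. Hence this half follows directly from the switching rule and requires no further argument. The substance of the lemma is therefore the liveness claim---that such a state is reached after finitely many steps---and this is where I expect the main difficulty to lie.

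To establish liveness I would first characterize the set of states the exploratory policy can occupy. Inspecting \eqref{eq:e_controller}, the policy assigns zero probability to any control that either violates the margin condition $h_{t+1} - (1-\alpha_h)h_t \geq 0$ or carries the robot outside the band $|h_{t+1} - h_0| < \epsilon_h$, while assigning strictly positive probability to every remaining (discretized) control. Thus during exploration the robot performs a random walk confined to the safe portion of the band $\mathcal{B} = \set{o : |h(o) - h_0| < \epsilon_h}$. By the environment assumption (all obstacles bounded, no ``hedgehog'' spikes), the level set $\set{h = h_0}$ and hence $\mathcal{B}$ is bounded, so after discretizing the input space the states reachable by the walk form a finite set $\mathcal{S}$. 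I would also note that the oCBF property \eqref{ocbf_dynamics} guarantees at least one feasible control at every safe state, so the walk never becomes stuck.

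Next I would cast the exploratory dynamics as a Markov chain on $\mathcal{S}$. Because every feasible discretized control receives strictly positive probability, there is positive transition probability from each state to each of its feasible successors; on the finite communicating component containing the entry state the chain is recurrent, so the walk almost surely visits every state of that component in finitely many steps. It then remains to exhibit an \emph{escape state}---one with $V \leq \alpha_V V_0$---inside this reachable set. Here I would invoke the hypothesis that there exists a path from the current state to the goal along which the oCBF condition holds. Since $V$ is positive definite and vanishes only at the goal \eqref{oclf_pd}, $V$ decreases along this path from $V_0$ down to $0$, so the path must cross the level $V = \alpha_V V_0$, furnishing a safe escape state.

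The hardest step is this last geometric linkage: arguing that the escape state guaranteed by the reachable path actually lies in the explorable band $\mathcal{B}$, rather than in some disconnected safe region the random walk cannot reach. I would close this gap by recalling that the robot entered the exploratory mode exactly because descending the oCLF became infeasible while respecting the oCBF at the level $h = h_0$; the obstacle blocking progress is therefore the one whose boundary $\mathcal{B}$ wraps, and any safe path to the goal must skirt that same boundary, so its first descent below $\alpha_V V_0$ occurs within $\mathcal{B}$ and hence within $\mathcal{S}$. Combining the recurrence of the walk with the existence of this escape state in $\mathcal{S}$ shows that the robot almost surely reaches a state with $V \leq \alpha_V V_0$ in finite time, at which point it exits the exploratory mode---establishing both the liveness claim and the claimed decrease in the oCLF.
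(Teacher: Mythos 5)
Your decomposition is right, and much of the machinery is sound: splitting off the mode-exit condition as immediate from the switching rule matches the paper, and your Markov-chain formalization of the band-confined random walk (finite discretized state set, strictly positive transition probabilities, recurrence on the communicating component) is actually a cleaner rendering of the paper's looser assertion that controllability plus closed oCBF contours imply the stochastic policy in \eqref{eq:e_controller} eventually visits every state near the $h_0$-level set. However, the step you yourself flag as hardest contains a genuine gap. It is not true that a safe path to the goal must make its first descent below $\alpha_V V_0$ inside the band $\mathcal{B}$. The lemma's hypothesis only guarantees that \emph{some} path satisfying the oCBF condition exists; such a path may retreat from the blocking obstacle, circle it at arbitrary distance, and first cross the level $V = \alpha_V V_0$ far outside the $\epsilon_h$-band around the $h_0$-level set (for instance, only once it nears the goal on the far side of a small, distant obstacle). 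Since the exploratory walk is confined to $\mathcal{B}$, exhibiting a crossing of $\alpha_V V_0$ somewhere in the safe set does not furnish the walk with an escape state, and your closing sentence ("any safe path to the goal must skirt that same boundary, so its first descent below $\alpha_V V_0$ occurs within $\mathcal{B}$") is asserted rather than proved --- and is false as a general geometric claim.

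The paper closes exactly this gap with a different construction. Instead of working with the assumed safe path, it considers the hypothetical oCLF-only trajectory launched from the stuck point $x_0$: by \eqref{oclf_dynamics} there always exists a control making $V$ contract by the factor $\alpha_V$, so such a trajectory exists if one ignores safety entirely. The reachability hypothesis is used only negatively, to rule out the case that the goal lies inside the $h_0$-superlevel set bordering $x_0$; in the remaining case the trajectory must eventually exit that superlevel set (contours being closed), and its exit point $\tilde{x}_T$ lies \emph{by construction} on or near the $h_0$-level set --- i.e., inside the explorable band --- while satisfying $V(\tilde{x}_T) \leq \alpha_V^T V_0 \leq \alpha_V V_0$ because $V$ has contracted at every step along the way. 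This single device simultaneously produces the required $V$-decrease and places the escape state in the region the random walk covers, which is precisely what your safe-path argument cannot guarantee. If you replace your final linkage with the paper's hypothetical-trajectory construction while keeping your Markov-chain recurrence argument, you obtain a complete proof, and one whose exploration half is arguably more rigorous than the published version.
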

\begin{proof}
Let $x_0$ denote the point where the system enters the exploratory mode, and let $h_0$ (resp. $V_0$) denote the value of the oCBF (resp. oCLF) observed at $x_0$. In the absence of any oCBF, the robot would be able to continue from $x_0$ along a trajectory towards the goal $\tilde{x} = \tilde{x}_1, \tilde{x}_2, \ldots$ such that $V_{t+1} \leq \alpha_V V_t$ at each point. There are two cases: either the goal lies inside the $h_0$-superlevel set tangent to $x_0$ (the goal lies inside an obstacle represented by the oCBF) or it does not. In the former case, no safe path exists to the goal (violating the assumption of this lemma). In the later case, since we assume contours of the oCBF to be closed, $\tilde{x}$ will eventually leave the $h_0$-superlevel set bordered by $x_0$. As a result, there exists a point along this hypothetical oCLF-only trajectory $\tilde{x}_T$ that lies near the $h_0$-level set of the oCBF and $V(\tilde{x}_T) \leq \alpha_V V(x_0)$. Since the system is assumed to be controllable and contours of the oCBF closed, the stochastic policy in \eqref{eq:e_controller} will eventually explore all states in the region around the oCBF $h_0$-level set. Thus, the stochastic exploration policy will eventually approach $\tilde{x}_T$ (after circling around the $h_0$-level set), at which point the hybrid controller will switch to the goal-seeking mode.
\end{proof}
\begin{theorem}
The hybrid goal-seeking/exploration controller in Fig.~\ref{fig:two-modes-operation} will eventually reach the goal.
\end{theorem}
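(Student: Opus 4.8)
The plan is to use the value of the oCLF, $V$, as a global measure of the robot's progress toward the goal, exploiting the property from~\eqref{oclf_pd} that $V \geq 0$ everywhere and $V = 0$ if and only if the robot has reached $\mathcal{O}_{goal}$. Interpreting ``reaching the goal'' as the asymptotic condition $V \to 0$ (consistent with the requirement $x_t \to x_g$ from the problem statement), it suffices to show that $V$ decreases to zero along every execution of the hybrid controller, under the standing assumption of Lemma~\ref{lemma_2} that a safe path to the goal exists.

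First I would track $V$ only at the discrete instants where the controller switches modes, rather than at every time step, since $V$ may increase transiently while the robot wanders during exploration. The two lemmas then control these switch-to-switch intervals: Lemma~\ref{lemma_1} (via constraint~\eqref{eq:g_clf}) guarantees that during any goal-seeking phase $V_{t+1} \leq \alpha_V V_t$, so $V$ decreases geometrically and never increases; Lemma~\ref{lemma_2} guarantees that any exploratory phase terminates (by the reachability assumption) and, by the switching condition, that the value of $V$ upon re-entering goal-seeking mode satisfies $V \leq \alpha_V V_0$, where $V_0$ was its value upon entering exploration. Consequently the values of $V$ sampled at successive mode-entry times form a non-increasing sequence that is bounded below by $0$, and is in fact strictly reduced by a factor of at least $\alpha_V < 1$ over each completed excursion.

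The convergence conclusion then follows from a case split on the number of mode switches. If there are only finitely many switches, then because Lemma~\ref{lemma_2} forbids the robot from remaining in the exploratory mode indefinitely, the execution must end in the goal-seeking mode; there Lemma~\ref{lemma_1} gives a geometric decay $V_{t+k} \leq \alpha_V^{\,k} V_t$, so $V \to 0$. If there are infinitely many switches, then $V$ is multiplied by a factor of at most $\alpha_V$ on each of infinitely many exploratory excursions while never increasing in between, which again forces $V \to 0$. In either case $V \to 0$, so the robot reaches the goal.

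The step I expect to be the main obstacle is establishing that $V$ is a genuine global descent certificate for the \emph{hybrid} system despite the transient increases permitted during exploration: the argument hinges entirely on reading the exploratory switching condition as enforcing a multiplicative decrease $V \leq \alpha_V V_0$ at the exit instant, and on Lemma~\ref{lemma_2}'s guarantee that this exit actually occurs. Ruling out a Zeno-type pathology---infinitely many switches that nonetheless fail to drive $V$ to zero---is subsumed by the fixed multiplicative factor $\alpha_V$, which ensures each excursion makes bounded-below progress rather than vanishingly small progress.
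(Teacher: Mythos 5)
Your proposal is correct and takes essentially the same route as the paper's proof: both track the oCLF value across the concatenated goal-seeking episodes, invoke Lemma~\ref{lemma_1} for the geometric decrease within each episode and Lemma~\ref{lemma_2} for termination of exploration with a multiplicative factor-$\alpha_V$ drop, and split into the cases of finitely versus infinitely many mode switches to conclude $V \to 0$ and hence goal convergence. The only difference is bookkeeping---you sample $V$ at mode-switch instants, whereas the paper indexes the full trace of goal-seeking values by cumulative goal-seeking time and bounds it by $\alpha_V^{t'}V(t_0)$---but the substance is identical.
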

\begin{proof}
Denote by $[t_i, T_i]$ the period of time the robot spends in the goal-reaching mode on the $i$-th time it enters that mode, and consider the mode-switching behavior of the robot over an infinitely long horizon. Since Lemma~\ref{lemma_2} shows that the system will always exit the exploratory mode, either there are infinitely many episodes in which the robot is in the goal-seeking mode (i.e. $i = 0, 1, \ldots, \infty$), or the robot eventually reaches the goal-seeking mode and stays there (i.e. $i = 0, 1, \ldots, N_g$ and $T_{N_g} = \infty$). We present the proof for the first case; the second case follows similarly. Consider the trace of oCLF values
\begin{equation}
    V({t_0}), \ldots, V({T_0}), V({t_1}), \ldots, V({T_i}), \ldots, V({t_{i+1}}), \ldots \label{eq:V_seq}
\end{equation}
and let $t'$ denote an index into this sequence (i.e. the cumulative time spent in the goal-reaching mode). We will show that this sequence is upper-bounded by the geometric sequence $\alpha_V^{t'} V({t_0})$. Within any episode $[t_i, T_i]$, the sequence $V({t_i}), \ldots, V({T_i})$ is upper-bounded by $V({t}) \leq \alpha_V^{(t - t_i)} V({t_i})$ by the properties of the oCLF. Further, by Lemma~\ref{lemma_2} we have the relation $V(t_{i+1}) \leq \alpha_i V(T_{i})$. As a result, we see that each term of the concatenated sequence~\eqref{eq:V_seq} obeys $V(t' + 1) \leq \alpha_V^{t'}V(t')$. As a result, as $t' \to \infty$, $V(t') \to 0$. By the properties of the oCLF, this implies that the robot's state approaches the goal as the cumulative time spent in the goal-reaching state increases (regardless of intervening mode switches). Since Lemma~\ref{lemma_2} implies that cumulative goal-reaching time goes to infinity with cumulative overall time (and our system operates in discrete time), this completes the proof that our hybrid controller converges to the goal.
\end{proof}

\subsection{Approximate one-step lookahead}\label{lookahead}

\begin{figure}[h]
    \centering
    \includegraphics[width=\linewidth]{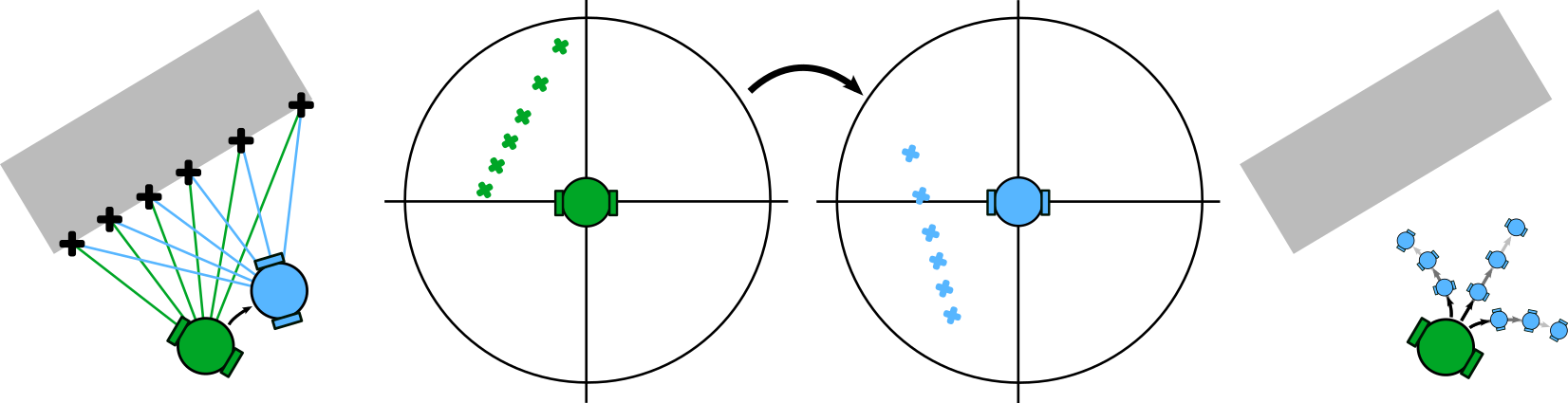}
    \caption{Future observations can be approximated by translating and rotating past observations. By considering multiple possible futures, the robot can select an action according to one-step lookahead policies like~\eqref{eq:g_controller} and \eqref{eq:e_controller}.}
    \label{fig:lidar-lookahead}
\end{figure}

The control policies~\eqref{eq:g_controller} and \eqref{eq:e_controller} require a prediction of $h_{t+1}$ and $V_{t+1}$ after executing a possible control action $u$. In turn, predicting $h_{t+1}$ and $V_{t+1}$ requires predicting the future observations $o_{t+1}$ and future range and bearing $(\rho_{t+1}, \phi_{t+1})$ to the goal.
To generate these predictions, we use prior knowledge of the robot's dynamics to predict\footnote{For robots with local dynamics (as in Section~\ref{assumptions_dynamics}), this can be done without a state estimate; otherwise, a state estimate is required.} $\Delta x = x_{t+1} - x_t$. This change in state implies a change in the robot's position and orientation in the 2D workspace, defining a transformation $T_u \in SE(2)$, describing the future position in the robot's current local frame. This transformation can be used to predict future observations, subject to errors introduced by discontinuities in the environment:
\begin{align}
    o^i_{t+1} &= T_u^{-1} o^i_{t} &\textit{(Lidar update)}\label{lidar_update}\\
    g_{t+1} &= T_u^{-1} \rho_{t} \mat{\cos\phi & \sin\phi}^T &\textit{(Goal update)}\label{goal_update}\\
    \rho_{t+1} &= ||g_{t+1}||;\ \phi_{t+1} = \angle g_{t+1} &\textit{(Range and bearing)}\label{range_update}
\end{align}
The predicted $o_{t+1}$, $\rho_{t+1}$, and $\phi_{t+1}$ can be used to evaluate $h_{t+1}$ and $V_{t+1}$ in the goal-seeking and exploratory control policies. Of course, the approximate nature of this update means that the safety and convergence properties of the oCBF and oCLF are no longer guaranteed to apply, but this issue can be mitigated in practice by adding a margin to the oCBF/oCLF conditions (e.g. $V_{t+1} - (1-\alpha_V) V_t \leq -\gamma_V$ in~\eqref{eq:g_controller} with $\gamma_V \geq 0$, as in \cite{Dean2020a}). Following our discussion of how to learn an oCBF and oCLF in the next section, we will use empirical results from simulation and hardware to demonstrate that this hybrid lookahead controller exhibits safe and convergent behavior in a range of environments, despite its use of approximations.

\section{Learning Observation-space CBFs}\label{learning}

Recent works have successfully applied neural networks to learn state-space CBFs \cite{Qin2021,Peruffo2020} and CLFs \cite{Richards2018,Abate2020,Chang2019,Chang2021}. These approaches sample points from the state space, label those points as either safe or unsafe, and then train a neural network to minimize the violation of the state-dependent equivalents of conditions~\eqref{ocbf_safe}--\eqref{ocbf_dynamics} (for CBFs) and~\eqref{oclf_pd}--\eqref{oclf_dynamics} (for CLFs) on all points on the training set. To evaluate conditions~\eqref{ocbf_dynamics} and~\eqref{oclf_dynamics}, these approaches either use a fixed control policy~\cite{Richards2018,Abate2020,Peruffo2020} or learn a single control policy~\cite{Qin2021,Chang2019,Chang2021}. These approaches are useful when the safe and unsafe regions of the state space are well-defined and static, but they essentially memorize which regions of the state space are safe and which are unsafe. As a result, they do not generalize when the safe/unsafe regions change, for example when moving between environments with different arrangements of obstacles.

There are two key insights in our approach to learning oCBFs and oCLFs, as contrasted with prior approaches to learning CBFs and CLFs. First, by defining these certificates in terms of observations allows the learned certificates -- especially the oCBF -- to generalize much more easily to new environments. Second, we train our oCBF and oCLF to not simply to satisfy conditions~\eqref{ocbf_dynamics} and~\eqref{oclf_dynamics} for a single example controller but to ensure the feasibility of the exploratory control policy~\eqref{eq:g_controller}. This task-specific learning approach allows us to learn an oCLF and oCBF that are directly relevant to the task at hand, as well as improve generalization by not committing to a single control policy.

In our approach, the oCLF is defined as $V(\rho, \sin\phi, \cos\phi) = V_\omega(\rho, \sin\phi, \cos\phi) + \rho^2 + (1-\cos\phi)/2$, where $V_\omega$ is a neural network with parameters $\omega$ (two hidden layers, 48 ReLU units each, one output dimension), $\rho$ and $\phi$ are the range and heading to the goal in the local frame, respectively, and the last term encodes the prior belief that the oCLF should correlate with distance from the goal.

Because the Lidar observations $o$ are symmetric under permutation, we must adopt a different structure for learning an oCBF that reflects this symmetry. Taking inspiration from \cite{Qin2021}, we define a permutation-invariant encoder $e(o) = \max_i e_\theta(o^i)$, which passes every point in the Lidar observation through an neural network $e_\theta$ with parameters $\theta$ (two hidden layers, 48 ReLU units each, 48 output dimensions). $e_\theta$ lifts each two-dimensional point to a 48-dimensional space, and we then take the element-wise maximum among all observed points. We then construct the oCBF as $h(o) = h_\sigma(e(o)) - \min_i ||o^i|| + d_c$, where $h_\sigma$ is a neural network with parameters $\sigma$ (two hidden layers, 48 ReLU units each, one output dimension) and the minimum is taken over all points in the Lidar observation $o$. The final two distance terms impose the prior belief that the oCBF should correlate with the distance to the nearest obstacle, and we use an additional learned term to ensure that $h$ satisfies~\eqref{ocbf_dynamics}.

To train these neural networks, we sample $N$ points uniformly from the state space, use a simulated Lidar model to compute the observation $o$ at each point, then use stochastic gradient descent to minimize the loss $L = \frac{1}{N}\sum_{i=0}^N (a_1 \text{ReLU}(\epsilon_h + h(o_i))\mathbf{1}_{safe}$ $+ a_2 \text{ReLU}(\epsilon_h - h(o_i))\mathbf{1}_{unsafe} + a_3 L_g  )$, where $a_1 = a_2 = 100$, $a_3 = 1$, $\mathbf{1}_{safe}$ and $\mathbf{1}_{unsafe}$ are indicator functions for the safe and unsafe sets, and $L_g(o)$ is the optimal cost of the relaxed goal-seeking control policy~\eqref{eq:g_controller}. We also find that adding a regularization term for the L2-norm of $h_\sigma$ and $V_\omega$ helps improve training performance. Loss curves for the training and validation datasets are shown in Fig.~\ref{fig:loss_curves}, covering 72 training epochs.
\begin{figure}[h]
    \centering
    \includegraphics[width=0.6\linewidth]{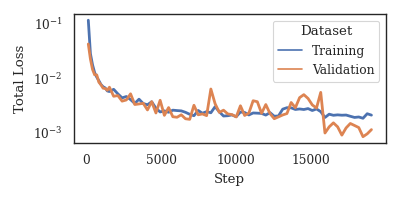}
    \caption{Loss $L$ on a dataset of 10,000 points ($10\%$ reserved for validation).}
    \label{fig:loss_curves}
\end{figure}

\subsection{Verification of learned certificates}

\textcolor{black}{
In Section~\ref{hybrid_controller}, we prove that if $h$ and $V$ are valid oCBFs and oCLFs, then our hybrid controller is guaranteed to reach the goal and remain safe. However, because we learn the oCBF and oCLF using neural networks trained on a finite number of sampled points, we do not have an explicit guarantee that the learned $h$ and $V$ will be valid. Related works in certificate learning rely on probabilistic verification methods such as those discussed in \cite{Qin2021} and \cite{Boffi2020}. These methods draw on results from statistical learning theory to show that if a learned certificate is valid (with some margin) on a large set of sample points, then there is a high probability that it will be valid in general. Even if a certificate is invalid in a small subset of the state space, almost-Lyapunov theory \cite{Liu2020} suggests that global safety and stability guarantees may still exist.}

\textcolor{black}{To check the validity of our learned certificates, we sample 100,000 points from the state space and check whether constraint~\eqref{eq:g_cbf} is feasible at each point; we found that this constraint is feasible at each tested point. However, we acknowledge that this sampling analysis falls short of a formal proof, and factors such as sensor noise and inaccuracies in the approximate lookahead model mean that the formal guarantees from Section~\ref{hybrid_controller} may not hold in practice. To demonstrate empirically that our learned certificates yield a safe, performant controller in practice, the next section presents experimental results showing that our controller remains safe and reaches the goal in a wide range of simulated and hardware environments. In our future work, we hope to extend our theoretical analysis to explain this strong empirical performance.
}

% \textcolor{red}{There are two levels of probabilistic guarantees: 1. probabilistic guarantee on the learned hybrid certificate. For this you can refer to some papers like 
% \href{https://arxiv.org/pdf/2008.05952.pdf}{this one}, and \href{https://arxiv.org/pdf/2101.05436.pdf}{Zengyi's paper}, and the references therein.}

% \textcolor{red}{2. probabilistic guarantee on the overall system given a probabilistic certificate. Overall that is a very hard problem. You can read and see whether you can see something in \href{https://arxiv.org/abs/1812.04474}{this paper}, \href{https://arxiv.org/abs/2003.03498}{this paper}, 
% \href{https://ieeexplore.ieee.org/document/8814901}{this paper},
% \href{https://sites.engineering.ucsb.edu/~jbraw/jbrweb-archives/tech-reports/twccc-2020-02.pdf}{this one}. There might be other ones that I am not aware of. The RL people are also using Lyapunov functions. There might be an explanation of the learned Lyapunov there.}

% \textcolor{red}{I think the second guarantee is much more difficult. It is better to refer to some explicit numbers/formulae here than just saying that probabilistic guarantees can be achieved.}

% \textcolor{red}{You can also mention that although using neural networks we learned a certificate that is probabilistically correct, in experiments we observe constant good performance. You can also point to the almost Lyapunov work by Liberzon to say guarantee 1 $\Rightarrow$ guarantee 2 is our ongoing work. We want to perform theoretical analysis to understand more of the good empirical performance.}

\section{Experiments}\label{experiments}

We validate our hybrid lookahead controller in two ways. First, we compare its performance against model-predictive control (MPC), reinforcement learning (RL), and state-based CBF/CLF baselines in simulation with a Dubins car model, demonstrating that our controller exhibits improved performance and generalizes well beyond the training environment. Second, we deploy our controller in hardware to navigate a mobile robot through a changing environment.

\subsection{Generalization beyond training environment}

To assess controllers' ability to generalize beyond the environments in which they were trained, both our oCBF/oCLF controller, an \textcolor{black}{end-to-end} RL controller, and a state-based CBF/CLF controller were trained in a static environment with 8 randomly placed obstacles. For the RL agent, we use \textcolor{black}{a neural-network policy with the same inputs as our controller, trained in an end-to-end manner using} proximal policy optimization (PPO) as implemented in the OpenAI Safe RL benchmarks~\cite{Ray2019}. The RL agent was trained with a large reward for reaching the goal, a smaller dense reward equal to $-\rho^2 + (1-\cos\phi)/2$, and a large cost for colliding with objects. The state-based CBF/CLF method was trained identically to our own method except with $h(x) = h_\sigma(x)$. The MPC method is model-based and does not need training; it constructs a convex approximation of the locally-observable free space (inspired by \cite{Deits2015}), plans a path towards the goal within that region, and executes the first \SI{0.1}{s} of that path. \textcolor{black}{When it becomes stuck, MPC attempts to navigate around the boundaries of obstacles.} All controllers were run at \SI{10}{Hz}, and numerical simulations of continuous-time Dubins car dynamics occurred at \SI{100}{Hz}.

Example trajectories for each controller navigating a Dubins car through a randomized 2D environment are shown in Fig.~\ref{fig:random_environment}, and Fig.~\ref{fig:generalization_bar_chart} shows the collision and goal-reaching rates for each controller across 500 such randomized environments (along with the average time required to reach the goal when successful). These data show that our proposed approach achieves a $100\%$ safety rate while reaching the goal within \SI{10}{s} in $93.2\%$ of trials (less than $100\%$ due to obstacles occluding the goal in some cases). Our method significantly outperforms the other learning-based methods (both of which have lower safety and goal-reaching rates due to difficulty generalizing beyond the training environment). MPC achieves slightly worse safety and goal-reaching rates than our approach: \textcolor{black}{$99.6\%$ and $90.4\%$, respectively}. Typical behaviors for each controller are shown in Fig.~\ref{fig:random_environment}: the state-based learned CBF does not adapt to the new environment and is unsafe, the PPO policy is not precise enough to pass through the gap to reach the goal (although it remains safe). Only our approach and MPC consistently reach the goal safely, \textcolor{black}{although our approach achieves a slightly higher goal-reaching rate and is much less computationally intensive (\SI{18}{ms} of online computational time for our method vs. \SI{102}{ms} for MPC).}

\begin{figure}[h]
    \centering
    \includegraphics[width=0.8\linewidth]{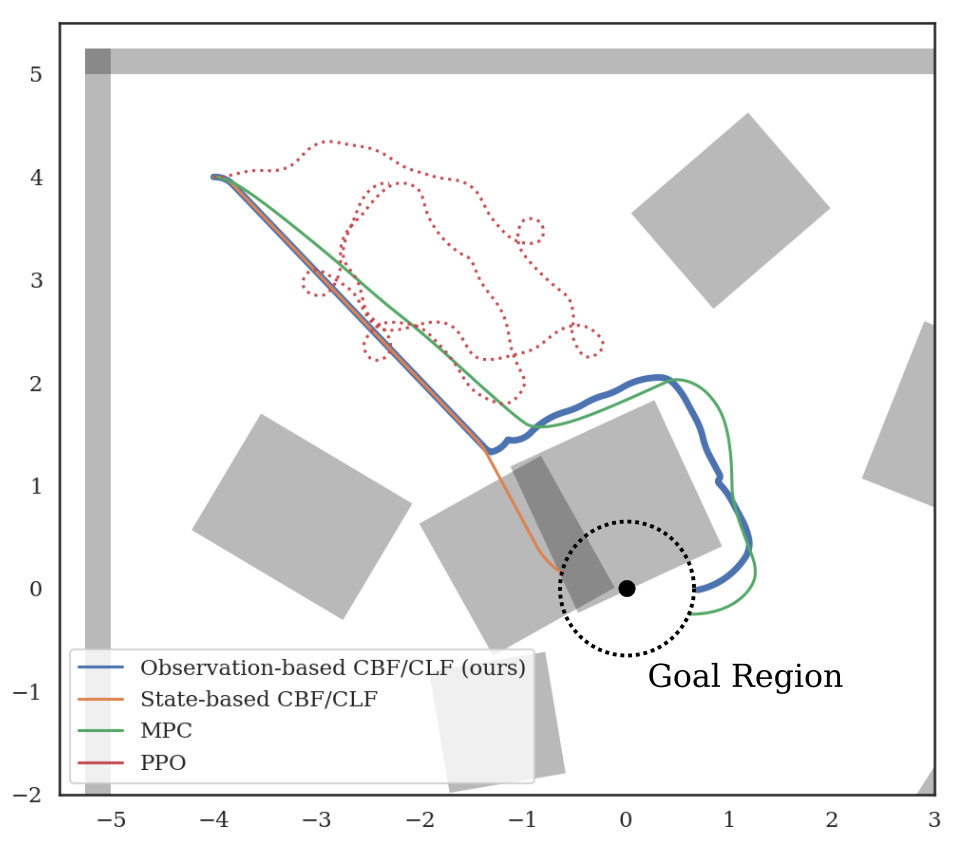}
    \caption{Plot of trajectories for observation-space CBF/CLF, state-based CBF/CLF, MPC, and end-to-end RL policies in random environment.}
    \label{fig:random_environment}
\end{figure}
\begin{figure}[h]
    \centering
    \includegraphics[width=0.8\linewidth]{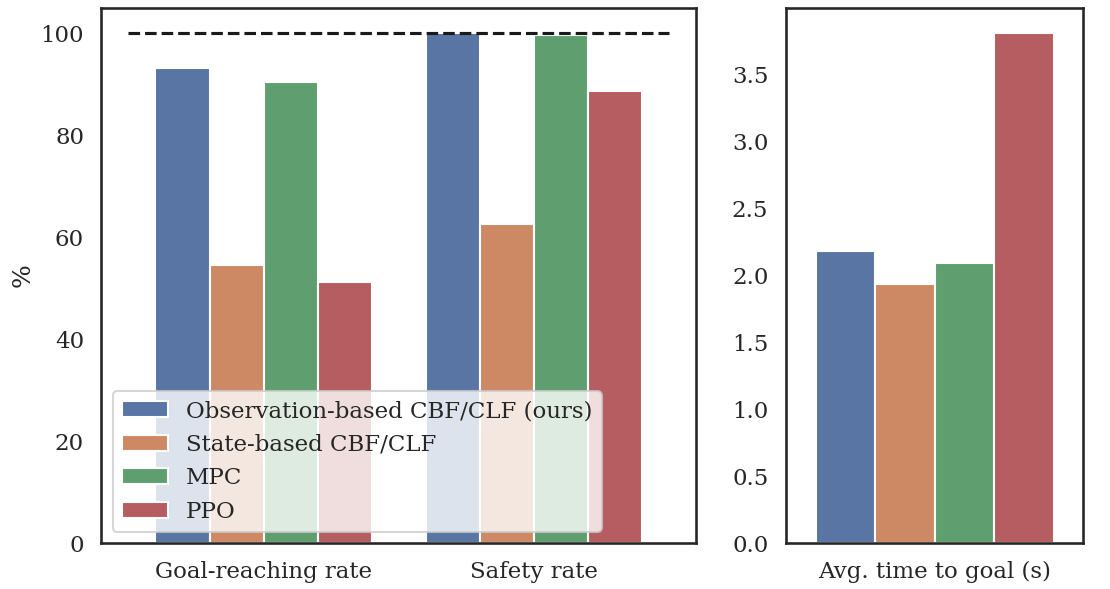}
    \caption{Goal-reaching rate, safety rate, and time-to-goal for oCBF/oCLF, state-based CBF/CLF, MPC, and PPO on waypoint tracking, averaged over 500 randomized environments. Trial runs were limited to a maximum of \SI{10}{s}, and time-to-goal was averaged only over trials that reached the goal.}
    \label{fig:generalization_bar_chart}
\end{figure}

\subsection{Hardware demonstration}\label{hardware}

To validate our controller in the real world, we deployed our controller on a TurtleBot 3 mobile robot. We model this platform as a Dubins car with control inputs for forward and angular velocity. The TurtleBot is equipped with an HLS-LFCD2 2D laser scanner with $1^\circ$ resolution in a $360^\circ$ range, which we downsample to 32 equally-spaced measurements. Lidar scans are preprocessed to convert from polar to local Cartesian coordinates before being passed to the learned encoder. Range and bearing to the goal are estimated using odometry, demonstrating robustness to imperfect state estimation. The controller is implemented with a zero-order hold at \SI{10}{Hz}, but Lidar data are only available at \SI{5}{Hz} (stale scans were used when new data were not available). \textcolor{black}{Although we do not explicitly model sensor uncertainty, we observed both small-magnitude noise and spurious detections in our Lidar data (these effects can be seen in the supplementary video), demonstrating that our controller can handle the noise resulting from real sensors.} We compare our method against the TurtleBot's built-in SLAM module in combination with an online dynamic window path planner.

Fig.~\ref{fig:hw} shows our controller successfully avoiding both fixed obstacles and an obstacle thrown into its path mid-experiment. Video footage of this experiments (and others in different environments) is included in the supplementary materials. Our controller successfully escapes the initial trap and avoids the thrown obstacle to reach the goal, while the combined SLAM/planning system becomes stuck when the new obstacle is added. The planner eventually becomes unstuck (after rotating the robot in place to re-create the map), but this issue demonstrates the difficulty of guaranteeing liveness for an online SLAM and planning system. In contrast, our controller is entirely reactive, allowing it to avoid the new obstacle without pausing, providing a significant advantage compared to model-based online planning methods.

\begin{figure}[ht]
    \centering
    \includegraphics[width=0.8\linewidth]{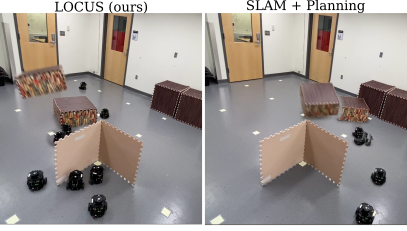}
    \caption{(Left) Our controller navigating around static and dynamic obstacles, shown as a composite image. (Right) A SLAM module paired with an online global planner gets stuck when an unexpected object is added to the scene. A video of these experiments is included in the supplementary materials.}
    \label{fig:hw}
\end{figure}

\section{Conclusions and Future Work}

Safely integrating rich sensor models into autonomous control systems is a challenging problem at the intersection of robotics, control theory, and machine learning. In this work, we explore one direction for constructing safe, generalizable perception-based controllers. We use neural networks to learn an observation-based control barrier function, which we combine with a hybrid control architecture that allows us to prove the safety and liveness of the resulting learning-enabled controller. We demonstrate in simulation that this architecture outperforms other learning- and model-based control systems \textcolor{black}{(including end-to-end learning using RL)}, and we successfully deploy our controller in hardware.

In addition to these successes, it is important to highlight the drawbacks of our approach, which point to interesting areas of future work. In particular, we rely on an approximate model of the Lidar sensor to predict observations one step into the future. This reliance prevents us from easily extending our approach from Lidar to image data, as it is much more difficult to construct an approximate model for image updates. In future work, we hope to replace this model-based update with a generative model learned from data. Using a learned, generative update might also allow us to relax the assumption of local dynamics, presenting a promising line of future study.

\bibliographystyle{IEEEtran}
\bibliography{IEEEabrv,main}

\end{document}